\DeclareMathOperator*{\argmax}{arg\,max}
\title{On Extensions of CLEVER: a Neural Network Robustness Evaluation Algorithm}
\name{Tsui-Wei Weng\textsuperscript{1,3*}, Huan Zhang\textsuperscript{2}\sthanks{Equally contributed. Codes: \url{https://github.com/huanzhang12/CLEVER}.}, Pin-Yu Chen\textsuperscript{3}, Aurelie Lozano\textsuperscript{3}, Cho-Jui Hsieh\textsuperscript{2}, Luca Daniel\textsuperscript{1}}
\address{\textsuperscript{1}Massachusetts Institute of Technology, Cambridge, MA 02139 \\
\textsuperscript{2}University of California, Los Angeles, CA 90095 \\
\textsuperscript{3}IBM Research, Yorktown Heights, NY 10598}
\begin{document}
%
\maketitle
\begin{abstract}
CLEVER (Cross-Lipschitz Extreme Value for nEtwork Robustness) is an Extreme Value Theory (EVT) based robustness score for large-scale deep neural networks (DNNs). In this paper, we propose two extensions on this robustness score. First, we provide a new formal robustness guarantee for classifier functions that are twice differentiable. We apply extreme value theory on the new formal robustness guarantee and the estimated robustness is called second-order CLEVER score. Second, we discuss how to handle gradient masking, a common defensive technique, using CLEVER with Backward Pass Differentiable Approximation (BPDA). With BPDA applied, CLEVER can evaluate the \textit{intrinsic} robustness of neural networks of a broader class -- networks with non-differentiable input transformations. We demonstrate the effectiveness of CLEVER with BPDA in experiments on a 121-layer Densenet model trained on the ImageNet dataset.
\end{abstract}
\begin{keywords}
Adversarial Examples, Deep Learning,  Robustness Evaluation
\end{keywords}
\section{Introduction}
\label{sec:intro}

It is well-known that deep neural networks (DNNs) are vulnerable to adversarial examples, and a small perturbation added to the input can mislead the network to classify in any desired class. There has been significant efforts developing verification techniques to prove that no adversarial perturbation $\delta$ exists if $\| \delta \|_p \leq r$ given an input $\bm{x_0}$ and a classifier function $f$. However, the verification problem is hard and generally intractable because a general neural network classifier is highly non-convex and non-smooth. 

Alternatively, instead of verifying the exact robustness $r$, one idea is to provide a \textit{lower bound} of $r$, which guarantees that no adversarial examples exist within an $\ell_p$ ball of radius $\epsilon$. We call $\epsilon$ the \textit{robustness lower bound} of the input image $\bm{x_0}$ on classifier function $f$. CLEVER (Cross-Lipschitz Extreme Value for nEtwork Robustness)~\cite{weng2018evaluating} is the first attack-agnostic robustness score to estimate the robustness lower bound $\epsilon$ for large-scale DNNs, e.g. modern ImageNet networks such as ResNet, Inception, etc. It is based on a theoretical analysis of formal robustness guarantee with Lipschitz continuity assumption. The authors of \cite{weng2018evaluating} propose a sampling based approach with Extreme Value Theory to estimate the local Lipschitz constant, and empirically, this estimation aligns well with other robustness evaluation metrics, for example, the distortion of adversarial perturbation found by strong attacks.

In this work, we provide two extensions of CLEVER. First, we derive a new robustness guarantee for classifier functions that are twice differentiable, and we estimate the theoretical bounds via extreme value theory. Second, we extend CLEVER to be capable of evaluating the robustness of networks with non-differentiable input transformations, making it available for a wider class of neural networks deployed with gradient masking based defense.

\section{Related Work}
\label{sec:related}

Evaluating the robustness of a neural network can be done by crafting adversarial examples with a specific attack algorithm~\cite{carlini2017towards,bastani2016measuring,chen2017ead,moosavi2016deepfool}. 
However, this methodology has a major drawback as the resilience of a network to existing attacks is not guaranteed to be extended to subsequent attacks.
In fact, many defensive methods have been shown either partially or completely broken after stronger and adaptive attacks are proposed~\cite{athalye2018obfuscated,athalye2018robustness,carlini2017magnet,carlini2017adversarial}. Thus, it is of great importance to provide an attack-agnostic robustness evaluation metric.

On the other hand, existing formal verification methods that solves the exact minimum adversarial distortion $r$ (which is independent of attack algorithm) are quite expensive -- verifying a small network with only a few hundred neurons on one input example can take a few hours~\cite{katz2017reluplex}, and in fact, even finding a non-trivial lower bound for $r$ can be hard, and so far only results on CIFAR and MNIST networks are available~\cite{weng2018towards,hein2017formal}.
\cite{weng2018evaluating} presents a framework to estimate local Lipschitz constant using extreme value theory, and then obtain an attack-agnostic robustness score (CLEVER) based on first-order Lipschitz continuity condition. CLEVER can scale to ImageNet networks.

Recently, Goodfellow~\cite{goodfellow2018gradient} raises concerns on CLEVER in the case of networks with gradient masking, a defensive technique that obfuscates model gradients to prevent gradient based attacks. One of the main objective of this work is to show that such concerns can be safely eliminated with the BPDA technique proposed in~\cite{athalye2018obfuscated}. Moreover, we also experimentally show how CLEVER can successfully handle networks with non-differentiable input transformations, including the stair-case function example in \cite{goodfellow2018gradient}.

\section{Extending CLEVER with Second Order Approximation}
\label{sec:theory}
\newtheorem{theorem}{Theorem}[section]
\newtheorem{lemma}[theorem]{Lemma}
\newtheorem{definition}[theorem]{Definition}
\newtheorem{notation}[theorem]{Notation}
\newtheorem{proposition}[theorem]{Proposition}
\newtheorem{corollary}[theorem]{Corollary}
\newtheorem{conjecture}[theorem]{Conjecture}
\newtheorem{assumption}[theorem]{Assumption}
\newtheorem{observation}[theorem]{Observation}
\newtheorem{fact}[theorem]{Fact}
\newtheorem{remark}[theorem]{Remark}
\newtheorem{claim}[theorem]{Claim}
\newtheorem{example}[theorem]{Example}
\newtheorem{problem}[theorem]{Problem}
\newtheorem{open}[theorem]{Open Problem}
\newtheorem{property}[theorem]{Property}
\newtheorem{hypothesis}[theorem]{Hypothesis}

\newcommand{\xo}{\bm{x_0}}
\newcommand{\x}{\bm{x}}
\newcommand{\xa}{\bm{x_a}}
\newcommand{\del}{\bm{\delta}}
\newcommand{\R}{\mathbb{R}}
\newcommand{\z}{\bm{z}}
\newcommand{\y}{\bm{y}}
\newcommand{\eps}{\epsilon}
\newcommand{\Lipsloc}{L_{q,x_0}^j}
\newcommand{\Ball}{\mathbb{B}_p(\xo,R)}
\newcommand{\Balltwo}{\mathbb{B}_2(\xo,R)}
\newcommand{\Ballx}{\mathbb{B}_p(\x,R)}

\subsection{Background and definitions}
Let $\xo \in \R^d$ be the input of a $K$-class classifier $f: \R^d \rightarrow \R^K$, the predicted class of $\xo$ is $c(\xo) = \argmax_{1\leq i \leq K} f_i(\xo)$. Given $\xo$ and $c$, we say $\xa := \xo + \del$ is an adversarial example if there exists a $\del \in \R^d$ makes $c(\xa) \neq c(\xo)$ while $\| \del \|_p$ is small. A successful \textit{untargeted attack} is to find a $\bm{x_a}$ such that $c(\bm{x_a}) \neq c(\bm{x_0})$ while a successful \textit{targeted attack} is to find a $\bm{x_a}$ such that $c(\bm{x_a})=t$ given a target class $t \neq c(\bm{x_0})$. On the other hand, the definition of norm-bounded robustness $\eps$ is the following: given a target class $t$,  $\eps$ is the \textit{targeted} robustness of $\xo$, if 
\begin{equation}
    \label{eq:target_robustness}
    g_t(\xo+\del) \geq 0, \, \forall~\| \del \|_p \leq \eps, 
\end{equation}
where $g_t(\x) := f_c(\x)-f_t(\x)$.  Similarly, $\eps$ is the \textit{untargeted} robustness if \eqref{eq:target_robustness} holds for all classes $t \neq c(\xo)$. 

\subsection{Robustness for continuously differentiable classifiers}
In \cite{weng2018evaluating}, the authors have shown that if the classifier function $f$ has continuously differentiable components $f_i$, the targeted robustness is 
\begin{equation}
\label{eq:1st_order_bnd}
    \eps = \min(\frac{g_t(\bm{x_0})}{L_q^t},R),
\end{equation}
where $L_q^t$ is the local Lipschitz constant for the function $g_t(\bm{x})$ within a local region $\x \in \Ball$ and $1/p+1/q = 1, \, 1, \leq p,q \leq \infty$.
A simple proof of this guarantee is based on the mean value theorem on the first order expansion of $g_t(\xo+\del)$: 
\begin{equation}
    \label{eq:g_x_first}
    \exists s \in [0,1], \enskip g_t(\xo+\del) = g_t(\xo) + \nabla g_t(\xo+s\del)^\top \del.
\end{equation}
With H\"older's inequality,  
\begin{align*}
    \label{eq:g_x_first_proof}
    g_t(\xo+\del) &= g_t(\xo) + \nabla g_t(\xo+s\del)^\top \del \\
    & \geq g_t(\xo) - \|\nabla g_t(\xo+s\del)\|_q \|\del\|_p \\
    & \geq g_t(\xo) - \max_{\x \in \Ball} \| \nabla g_t(\x) \|_q \cdot \| \del \|_p \\
    & = g_t(\xo) - L_q^t \cdot \|\del\|_p.
\end{align*}
Thus, the targeted robustness bound \eqref{eq:1st_order_bnd} is obtained by requiring the lower bound of $g_t(\xo+\del)$ to be non-negative. The authors of~\cite{weng2018evaluating} further extend their analysis to neural networks with ReLU activations, which is a special case of \textit{non-differentiable} functions.

\subsection{Robustness for twice differentiable classifiers}
In this work, we provide formal robustness guarantees when classifier functions $f$ are twice differentiable -- for example, neural networks with twice differentiable activations such as tanh, sigmoid, softplus, etc. For a twice-differentiable function $g_t(\x) := f_c(\x)-f_t(\x)$, there exists $s \in [0,1]$ such that 
\begin{equation}
    \label{eq:g_x}
    g_t(\xo+\del) = g_t(\xo) + \nabla g_t(\xo)^\top \del + \frac{1}{2} \del^\top \bm{H}(\xo+s\del) \del,
\end{equation}
where $\bm{H}(\xo+s\del)$ is the Hessian of $g_t$ at $\xo+s\del$. This is analogous to the Mean Value Theorem in the first order case, but extended with a second order term. This expansion of $g_t(\xo+\del)$ can be used to derive the targeted robustness of $\xo$ in the following Theorem: 
\begin{theorem}[Formal robustness guarantee] 
\label{thm:delta_bnd}
Given an input $\xo$ and a $K$-class classifier $f$, the targeted robustness of $\xo$ is 
\begin{equation}
\label{eq:2nd_order_bnd}
	\eps = \min(\frac{-b+\sqrt{b^2+2a\gamma}}{a}, R)
\end{equation}
where $a = \max_{\x \in \Ball} \| \bm{H}(\x) \|_{p,q}$, $b = \| \nabla g_t(\xo) \|_p$, and $\gamma = g_t(\xo)$.
\end{theorem}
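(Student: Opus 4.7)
The plan is to substitute the second-order expansion \eqref{eq:g_x} into the robustness condition $g_t(\xo+\del) \geq 0$, lower-bound both the gradient term and the Hessian quadratic form using H\"older's inequality (the latter via an appropriate induced operator norm), and then reduce the guarantee to solving a single quadratic inequality in $\|\del\|_p$.

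First, for the linear term I would apply H\"older's inequality exactly as in the first-order argument preceding the theorem, yielding
\[
\nabla g_t(\xo)^\top \del \;\geq\; -\|\nabla g_t(\xo)\|_q \,\|\del\|_p,
\]
which I identify with $-b\,\|\del\|_p$. For the quadratic term I would apply H\"older twice, using the induced operator norm $\|H\|_{p,q} := \sup_{\|v\|_p=1}\|Hv\|_q$:
\[
\del^\top H(\xo+s\del)\, \del \;\geq\; -\|\del\|_p \,\|H(\xo+s\del)\del\|_q \;\geq\; -\|H(\xo+s\del)\|_{p,q}\,\|\del\|_p^2.
\]
Provided $\|\del\|_p \leq R$, every intermediate point $\xo+s\del$ (for $s \in [0,1]$) lies in $\Ball$, so I may replace $\|H(\xo+s\del)\|_{p,q}$ by its uniform supremum $a = \max_{\x \in \Ball}\|H(\x)\|_{p,q}$.

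Combining these two bounds with \eqref{eq:g_x} yields the polynomial lower bound
\[
g_t(\xo+\del) \;\geq\; \gamma - b\,\|\del\|_p - \tfrac{1}{2}a\,\|\del\|_p^2.
\]
Requiring the right-hand side to be non-negative is the quadratic inequality $\tfrac{a}{2}\|\del\|_p^2 + b\,\|\del\|_p - \gamma \leq 0$. Since $a,\gamma \geq 0$, solving by the quadratic formula and keeping the non-negative root gives $\|\del\|_p \leq \frac{-b+\sqrt{b^2+2a\gamma}}{a}$. Intersecting this with the ball-constraint $\|\del\|_p \leq R$ (needed for the uniform Hessian bound to be valid along the whole segment) produces the formula \eqref{eq:2nd_order_bnd}.

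The main obstacle is the treatment of the Hessian quadratic form: one has to commit to the correct induced operator norm $\|\cdot\|_{p,q}$ and check that the double application of H\"older is the right pairing, in particular so that the bound degenerates to the first-order guarantee \eqref{eq:1st_order_bnd} as $a \to 0$ (a straightforward Taylor check on $\frac{-b+\sqrt{b^2+2a\gamma}}{a}$). A secondary subtlety is the interplay between the local region $\Ball$ and the perturbation: the Hessian bound is only legitimate while $\xo+s\del$ remains inside the ball, which is exactly why one must clip at $R$ before maximizing over $\x$ and then take the minimum with $R$ at the end.
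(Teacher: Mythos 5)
Your proposal is correct and follows essentially the same route as the paper's proof: H\"older's inequality on the linear term, a double application of H\"older via the induced norm $\|\cdot\|_{p,q}$ on the quadratic term, a uniform bound over $\Ball$, and solving the resulting quadratic inequality in $\|\del\|_p$. Your explicit remarks on the need for $\xo+s\del$ to stay in the ball and on the first-order limit as $a\to 0$ are sensible refinements but do not change the argument.
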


\begin{proof}
By holder's inequality and the definition of induced norm, we have
\begin{equation*}
    | \nabla g_t(\xo)^\top \del | \leq \| \nabla g_t(\xo) \|_q \| \del \|_p 
\end{equation*}
and 
\begin{align*}
    | \del^\top \bm{H}(\xo+s\del) \del | &\leq \| \bm{H}(\xo+s\del) \del \|_q \| \del \|_p \\
    &\leq \| \bm{H}(\xo+s\del)\|_{p,q} \|\del \|_p \|\del \|_p \\
    &\leq \max_{\x \in \Ball} \|\bm{H}(\x)\|_{p,q} \|\del \|_p^2.
\end{align*}
Let $a = \max_{\x \in \Ball} \| \bm{H}(\x) \|_{p,q}$, $b = \| \nabla g_t(\xo) \|_p$, and $\gamma = g_t(\xo)$, we get a lower bound of $g_t(\xo+\del)$: 
\begin{align}
    g_t(\xo+\del) &= g_t(\xo) + \nabla g_t(\xo)^\top \del + \frac{1}{2} \del^\top \bm{H}(\xo+s\del) \del \nonumber \\
    & \geq  g_t(\xo) - b \| \del \|_p -\frac{1}{2} a \| \del \|_p^2. \label{eq:2nd_order_ineq} 
\end{align}
If we can guarantee \eqref{eq:2nd_order_ineq} $\geq 0$, then we can guarantee $g_t(\xo+\del) \geq 0$, which is the definition of targetted robustness in \eqref{eq:target_robustness}. Thus, the condition of \eqref{eq:2nd_order_ineq} $\geq 0$ gives 
\begin{equation*}
   \| \del \|_p \leq \frac{-b+\sqrt{b^2+2a\gamma}}{a}. 
\end{equation*}
\end{proof}

\subsection{Sampling via Extreme Value Theory}
\label{sec:sample_evt}

Theorem~\ref{thm:delta_bnd} needs the value $a \coloneqq \max_{\x \in \Ball} \|\bm{H}(\x)\|_{p,q}$, which is the maximum subordinate norm of the Hessian matrix within $\x \in \Ball$. When $p=q=2$, it becomes the well-known spectral norm, and can be evaluated efficiently on a single point $\x$ using power iteration or Lanczos method. Under the framework of CLEVER, we apply extreme value theory to estimate $a$ by sampling different $\x \in \Ball$ and running power iterations on each sampled point. In this paper, we focus on the case of $p=q=2$ only ($\ell_2$ robustness). After we get an estimate of $a$, a second order robustness lower bound can be estimated at point $\xo$ using~\eqref{eq:2nd_order_bnd}. The estimated bound of \eqref{eq:1st_order_bnd} is named \textit{1st-order} CLEVER while the estimated bound of \eqref{eq:2nd_order_bnd} is called \textit{2nd-order} CLEVER. 


\section{CLEVER with gradient masking based defense}
\label{sec:typestyle}

\subsection{Gradient Masking}

Gradient masking~\cite{tramer2017ensemble} is a popular defending method against adversarial examples where the model does
not provide useful gradients for generating adversarial examples. Typical gradient masking techniques include adding non-differentiable layers~\cite{guo2017countering} (bit-depth reduction, JPEG compression, etc) to the network, numerically making the gradient vanish (Defensive Distillation~\cite{papernot2016distillation}), and modifying the optimization landscape of the loss function in a local region~\cite{tramer2017ensemble} of each data point. These methods typically prevent gradient-based adversarial attacks by providing non-informative gradients. However, many of the gradient masking techniques have been shown ineffective as a defense. Notably, Defensive Distillation can be bypassed by attacking the logit (unnormalized probability) layer values to avoid the saturated softmax functions; many non-differentiable transformation functions can be bypassed using the Backward Pass Differentiable Approximation (BPDA)~\cite{athalye2018obfuscated}; the modifications in local landscape of the loss function can be escaped by adding a small random noise when performing the attack~\cite{tramer2017ensemble}.

When CLEVER is evaluated, we always use the logit layer values, thus we are not subject to the saturation of the sigmoid units. Additionally, during the sampling processes, we evaluate gradients using a large number of randomly perturbed images, thus CLEVER is likely to escape the region of masked gradients in local loss landscape. The remaining concern is thus whether CLEVER can be evaluated on networks with a non-differentiable layer as a defense. For example, if the input image is quantized via bit-depth reduction, a staircase function is applied to the network and thus its gradient cannot be computed via automatic differentiation. We will formally discuss this situation in the next section.

\subsection{Apply Backward Pass Differentiable Approximation (BPDA) to CLEVER}

For a neural network classifier $f(\bm{x})$, we can apply a non-differentiable transformation $h(\bm{x})$ to the input $\bm{x}$ and then feed the data after transformation into $f$. The function $f(h(\bm{x}))$ thus becomes non-differentiable, and gradient based adversarial attacks fail to find successful adversarial examples. An example of $h(\bm{x})$ is a staircase function, as suggested in~\cite{goodfellow2018gradient}. This transformation also hinders the direct use of CLEVER to evaluate the robustness of $f(h(\bm{x}))$. 

To handle non-differentiable transformations, we use the Backward Pass Differentiable Approximation (BPDA)~\cite{athalye2018obfuscated} technique. The intuition behind BPDA is that although $h(\bm{x_0})$ is non-differentiable (e.g., bit-depth reduction, JPEG compression, etc), it usually holds that $h(\bm{x_0}) \approx \bm{x_0}$. Thus, in backpropagation, we can assume that

\begin{equation}
    \left. \nabla_x f(h(\bm{x})) \right|_{\bm{x}=\xo} \approx \left. \nabla_x f(\bm{x}) \right|_{\bm{x}=h(\xo)}.
\end{equation}

To evaluate CLEVER for a network with an input transformation $h$ (for example, a staircase function), $\bm{x}$ is sampled within an $\ell_p$ ball around $\xo$. Then, a transformation $h(\bm{x_0})$  is applied, such that $\hat{\bm{x}}=h(\bm{x})$. Then, the backpropagation procedure computes $\nabla_{\hat{\bm{x}}} f(\hat{\bm{x}})$. We simply collect $\nabla_{\hat{\bm{x}}} f(\hat{\bm{x}})$ as the gradient, and compute its norm as a sample for Lipschitz constant estimation.

\subsection{CLEVER is a White-Box Evaluation Tool}

CLEVER is intended to be a tool for network designers and to evaluate network robustness in the ``white-box'' setting in which we know how a (defended) neural network processes the input. In this case, we can deal with the non-differentiable transformation $h$ with BPDA, and evaluate the \textit{intrinsic} robustness of the model, without the ``False Sense of Security~\cite{athalye2018obfuscated}'' provided by gradient masking.

In black-box attack setting, the gradient of $f(h(\bm{x}))$ must be evaluated via finite differences~\cite{CPY17zoo}, thus a non-differentiable $g(\bm{x})$ prevents gradient based attacks in black-box settings because the estimated gradient becomes infinite (i.e., the value of $f(g(\bm{x}))$ is unlikely to change when $\bm{x}$ is changed by a small amount). Goodfellow \cite{goodfellow2018gradient} raises concerns on the effectiveness of CLEVER in this setting, but this setting is different from our intended usage of CLEVER. Most importantly, CLEVER computes gradients using backpropagation via automatic differentiation in the white-box setting, rather than using finite differences. Despite the limited numerical precision on digital computers, CLEVER is not subject to the same numerical issues as in the black-box attack setting. Unless backpropagation fails, CLEVER is able to estimate a reasonable robustness score reflecting the intrinsic model robustness.

\section{Experiments}
\label{sec:exp}

\subsection{Experiments on 1st Order and 2nd Order Bounds}
We compute the targeted robustness bounds for a 7-layer CNN model with tanh activations (which is twice differentiable) on CIFAR dataset with a validation accuracy of 72.6\%. 
We calculated both Eq. \eqref{eq:1st_order_bnd} and \eqref{eq:2nd_order_bnd} via sampling with extreme value theory, and we denote the estimated scores as ``1st order'' and ``2nd order'' CLEVER scores respectively in the Tables. In particular, we follow the sampling procedure proposed in \cite{weng2018evaluating} to estimate the Lipschitz constant by fitting the samples with maximum likelihood estimation on Reversed Weibull distribution and calculate the estimated robustness scores of \eqref{eq:1st_order_bnd}. For the ``2nd order'' bound \eqref{eq:2nd_order_bnd}, we also use sampling and extreme value theory to calculate the estimated bounds, as describe in Section~\ref{sec:sample_evt}. 
For fair comparison, we use the same number of samples ($N_b = 100$ and $N_s = 200$) for both estimated bounds and we compare their average as well as the percentage of image examples that the score is larger than the other. For each image, we select three attack target classes: least likely, random and runner-up. The results are summarized in Tables \ref{tab:cifar-least}, \ref{tab:cifar-top2} and \ref{tab:cifar-random}. We observe that the 1st order and 2nd order average CLEVER scores usually stay close, indicating that both scores agree with each other. 

Since CLEVER is a score of estimated lower bound, we desire the score is not trivially small, but smaller than the upper bound found by adversarial attacks (in our case the CW $\ell_2$ attack). As shown in Tables \ref{tab:cifar-least}, \ref{tab:cifar-top2} and \ref{tab:cifar-random}, all CLEVER scores are less then CW $\ell_2$ distortion. Second order CLEVER can sometimes give a better result than its first order counterpart, indicating that second order approximation is probably more accurate for these examples. The ``avg. \% of increase on the score'' rows in tables report the improvement of score when one method is better than the other; for example, in runner-up target, second order CLEVER increases the score for 82\% of the examples, and the average improvement of score comparing to first order CLEVER is 58\%.

\setlength{\floatsep}{10pt}
\setlength{\textfloatsep}{10pt}

\begin{table}[t]
\centering
\caption{Comparison of 1st order and 2nd order $\ell_2$ CLEVER with least-likely target labels on a 7-layer $\tanh$ CIFAR CNN. The average distortion found by CW-$\ell_2$ attack is 0.310.}
\begin{tabular}{l|c|c}
\hline
Least-likely Target           & 1st order & 2nd order \\ \hline
avg $\ell_2$ CLEVER               & 0.057     & 0.051     \\ \hline
\% of images with larger score              & 54        & 46        \\ \hline
avg \% of increase on the score  & 47\%    & 44\%    \\ \hline
\end{tabular}
\label{tab:cifar-least}
\end{table}

\begin{table}[t]
\centering
\caption{Comparison of 1st order and 2nd order $\ell_2$ CLEVER with runner-up target labels on a 7-layer $\tanh$ CIFAR CNN. The average distortion found by CW-$\ell_2$ attack is 0.101.}
\begin{tabular}{l|c|c}
\hline
Runner-up Target           & 1st order & 2nd order \\ \hline
avg $\ell_2$ CLEVER                 & 0.024     & 0.026     \\ \hline
\% of images with larger score              & 18        & 82        \\ \hline
avg \% of increase on the score & 77\%    & 58\%    \\ \hline
\end{tabular}
\label{tab:cifar-top2}
\end{table}

\begin{table}[t]
\centering
\caption{Comparison of 1st order and 2nd order $\ell_2$ CLEVER with random target labels on a 7-layer $\tanh$ CIFAR CNN. The average distortion found by CW-$\ell_2$ attack is 0.264.}
\begin{tabular}{l|c|c}
\hline
Random Target          & 1st order  & 2nd order \\ \hline
avg $\ell_2$ CLEVER                & 0.049     & 0.036     \\ \hline
\% of images with larger score              & 76        & 24        \\ \hline
avg \% of increase on the score & 55\%    & 68\%    \\ \hline
\end{tabular}
\label{tab:cifar-random}
\end{table}

\subsection{Experiments on Networks with Input Transformation as a Gradient Masking based Defense}

We conduct experiments on a 121-layer DenseNet~\cite{huang2017densely} network pretrained on ImageNet dataset\footnote{model available at \url{https://github.com/pudae/tensorflow-densenet}}. We employ two non-differentiable input transfomrations that mask gradients: bit-depth reduction (reducing each color channel from 8-bit to 3-bit, setting all lower bits to 0) and JPEG compression (quality set to 75\%).
We compute $\ell_2$ CLEVER (first order) scores for the network with and without input transformations, with CLEVER parameter $N_b = 200$ and $N_s = 1024$. We randomly choose 100 images from the ImageNet validation set, and select three attack target classes for each image (least likely, random and runner-up). Misclassified images are skipped.


Table~\ref{tb:input_transform} compares the $\ell_2$ CLEVER scores for three target classes, for the original model, and for bit-depth reduction or JPEG compression as input transformations. BPDA is used to compute CLEVER when an input transformation is applied. Not surprisingly, the CLEVER scores for networks with input transformation as a gradient masking method do not noticeably increase, indicating that these transformations do not increase the model's intrinsic robustness; in other words, with BPDA applied, we can still obtain similar gradients as the original model, thus it is expected that CLEVER scores do not change too much in this situation.

\begin{table}[t]
\label{tb:input_transform}
\caption{$\ell_2$ robustness CLEVER scores with and without input transformations on a 121-layer Densenet model, for three different target classes. The average adversarial distortion of CW $\ell_2$ attack for the same set of images are 0.2058, 0.52788 and 0.66114, for runner-up, random and least-likely target classes, respectively.}
\centering
\begin{adjustbox}{max width=\linewidth}
\begin{tabular}{|l|c|c|c|}
\hline
Target Class        & Runner-up & Random  & Least Lilely \\ \hline
No transformation   & 0.14229   & 0.35632 & 0.44725      \\ \hline
Bit-depth reduction & 0.10223   & 0.26224 & 0.34722      \\ \hline
JPEG compression    & 0.11539   & 0.27804 & 0.36275      \\ \hline
\end{tabular}
\end{adjustbox}
\end{table}

\section{Conclusions}

CLEVER~\cite{weng2018evaluating} is a first-order approximation based robustness score. We move one step further to give a second order formal guarantee for DNN robustness. We show that it improves the estimated robustness lower bound for some examples, and in many cases both first and second order CLEVER scores are coherent.
Additionally, we successfully apply Backward Pass Differentiable Approximation (BPDA) to compute CLEVER scores for a network with non-differentiable input transformations, including staircase functions. Our discussions and results remedy the concerns raised in~\cite{goodfellow2018gradient}.

\section{Acknowledgement}
Tsui-Wei Weng and Luca Daniel acknowledge partial support of MIT IBM Watson AI Lab. 


\bibliographystyle{IEEEbib}
\bibliography{refs}

\end{document}